\documentclass{article}

\usepackage[preprint]{neurips_2026}

\usepackage[utf8]{inputenc}
\usepackage[T1]{fontenc}
\usepackage{hyperref}
\usepackage{url}
\usepackage{booktabs}
\usepackage{amsfonts}
\usepackage{amsmath}
\usepackage{amssymb}
\usepackage{amsthm}
\usepackage{nicefrac}
\usepackage{microtype}
\usepackage{xcolor}
\usepackage{graphicx}
\graphicspath{{./}}
\usepackage{wrapfig}
\usepackage{subcaption}
\usepackage{multirow}
\usepackage{tabularx}
\usepackage{enumitem}
\usepackage{CJKutf8}

\newtheorem{theorem}{Theorem}

\newtheorem{definition}{Definition}

\title{When Think-with-Image Meets Safety: What Determines Multimodal Jailbreak Robustness?}

\author{%
  Yuan Tian\thanks{Equal contribution.}\thanks{Corresponding authors: \texttt{yuan.tian.research@gmail.com}, \texttt{binghuresearch@gmail.com}.} \\
  Independent Researcher
  \And
  Bing Hu\footnotemark[1]\footnotemark[2] \\
  Independent Researcher
  \AND
  Fang Wu \\
  Stanford University
  \And
  Xiaomin Li \\
  Harvard University
  \AND
  Binghang Lu \\
  Purdue University
  \And
  Neil Zhenqiang Gong \\
  Duke University
}

\begin{document}

\maketitle
\begin{abstract}
Think-with-image reasoning is emerging as a new inference paradigm for large vision-language models, but its safety implications remain poorly understood. Existing systems already span multiple process designs, including direct response generation, text-only prior turn, visual-state manipulation, and explicit external image-tool invocation. In this paper, we ask which of these evaluated paradigms improves multimodal jailbreak robustness, and why. Across multiple vision-language models, explicit image-tool interaction yields the lowest attack success rates in our experiments, reducing jailbreak success by around 30\% relative on average across the evaluated models. This finding is initially surprising: ASR remains low even when the returned image-tool output is manually overridden or itself unsafe-looking, but returns near direct-answering levels under text-only prior turn controls. These results indicate that the lower ASR is not explained by benign returned-image semantics or by the textual image-tool trace alone. To explain the pattern, we introduce an \emph{image-tool safety vector} framework that models image-tool invocation as a residual shift in hidden representations toward a safety-relevant direction. Representation-level analyses and activation interventions support this account. Overall, our results suggest that explicit image-tool interaction is a promising design pattern for improving jailbreak robustness, while also motivating pipeline-specific safety evaluation.
\end{abstract}

\section{Introduction}

Large vision-language models (LVLMs) are rapidly moving from single-pass image-text inference toward \emph{think-with-image} (TwI) reasoning, where natural-language reasoning is interleaved with intermediate visual operations~\citep{wu2023visualchatgpt,suris2023vipergpt,yang2023mmreact,wu2024vstar,su2025thinkimages}. Existing systems span paradigms ranging from direct image generation or editing to grounding intermediate steps to regions, sketches, or coordinates~\citep{yang2023setofmark,peng2024kosmos2,gupta2023visprog}, and frameworks such as Visual Sketchpad~\citep{hu2024visual} and Chain of Multi-modal Thought~\citep{cheng2025comt} further augment LVLMs with external tools, including object detectors, segmenters, and geometric solvers, whose outputs are fed back into later reasoning~\citep{yao2023react,schick2023toolformer,lu2023chameleon,chen2026towards}. This shift has expanded multimodal capability on tasks requiring spatial grounding, compositional perception, and multi-step decision making.

However, think-with-image reasoning is not only a capability shift; it is also a structural change to the inference process. Different paradigms introduce different intermediate states, control flows, and external interactions. Compared with direct query-response generation, image-tool-augmented interleaving inserts additional decision points: the model decides whether to call an image tool, produces a structured image-tool request, consumes a returned visual artifact, and conditions later reasoning on that artifact. These interaction steps introduce token patterns and visual states that are absent from standard end-to-end inference. As a result, the safety behavior of a TwI system cannot be inferred directly from the safety properties of the base model alone, and different paradigms may have materially different safety profiles.

This question is increasingly important as multimodal systems are deployed in settings where visual reasoning and tool use are closely intertwined~\citep{zheng2024seeact,xie2024osworld,koh2024visualwebarena}. Prior work shows that LVLMs remain vulnerable to harmful image-text inputs~\citep{liu2024mmsafetybench,luo2024jailbreakv}, while tool-use safety studies have mainly examined text-only agents or risks introduced by external tools~\citep{greshake2023indirect,ruan2024toolemu,zhan2024injecagent,debenedetti2024agentdojo}. Yet it remains unclear how different TwI paradigms affect safety behavior: explicit image-tool interaction may amplify multimodal jailbreak risk, shift failure modes, or even improve robustness.

We study this question across multiple TwI paradigms and vision-language models, with a particular focus on explicit image-tool invocation. Our main empirical finding is counterintuitive: \textbf{explicit image-tool interaction yields the lowest attack success rates among the evaluated settings.} Across our evaluations, this paradigm reduces jailbreak success by around 30\% relative on average across the evaluated models. The effect is robust to the visual content returned by the image tool: even when the returned image is manually overridden or itself unsafe, models remain harder to jailbreak as long as the reasoning process includes explicit image-tool interaction. This result suggests that the lower ASR is not explained solely by the semantics of the returned image. Instead, the image-tool-calling trajectory appears to act as a safety-relevant cue during TwI.

To account for this phenomenon, we introduce an \emph{image-tool safety vector} framework. At a representation level, the framework models image-tool invocation as a residual shift in hidden states and asks whether the resulting image-tool state makes safety behavior more linearly readable and more steerable. Representation-level analyses across returned-image settings support this view and provide a mechanistic account for why explicit image-tool interaction can improve safety even when the image-tool output itself is not benign.

Our contributions are threefold:
\begin{itemize}[leftmargin=*,topsep=0pt,itemsep=0pt]
    \item We frame multimodal jailbreak robustness as a comparison across think-with-image process paradigms, including direct response, text-only prior turn, simulated image generation/editing, visual-state variants, and explicit image-tool interaction.
    \item We find that explicit image-tool interaction yields the lowest attack success rates among the evaluated settings and show that the ASR reduction remains under substantial changes to returned visual content but disappears under text-only prior turn controls.
    \item We introduce the image-tool safety vector framework and support it with representation-level and activation-intervention evidence, linking image-tool invocation to an internal state that is more safety-separable and behaviorally steerable.
\end{itemize}

\vspace{-1mm}\section{Related Work}\vspace{-2mm}

Our study connects think-with-image reasoning, tool-using multimodal agents, multimodal jailbreak evaluation, and representation-level safety analysis.\vspace{-8pt}

\paragraph{Think-with-image reasoning.}
Recent LVLMs increasingly perform \emph{think-with-image} reasoning, where intermediate visual states become part of inference. Prior work externalizes visual thoughts~\citep{cheng2026visual} through sketches, whiteboards, generated visualizations, regions, or coordinates~\citep{hu2024visual,menon2024whiteboard,li2025vot,yang2023setofmark,wu2024vstar}, and also studies reasoning boundaries, multi-step multimodal reasoning benchmarks, or training procedures~\citep{chen2024unlocking,cheng2025comt,chen2024m3cot,su2025openthinkimg}. These works establish think-with-image as a broad capability paradigm, but largely leave its adversarial safety implications unexplored.\vspace{-8pt}

\paragraph{Tool-augmented multimodal agents.}
Tool-using models interleave reasoning with external actions or modules~\citep{yao2023react,schick2023toolformer}. Multimodal agents extend this paradigm by integrating vision tools, executable programs, and grounded actions for visual reasoning and agentic tasks~\citep{wu2023visualchatgpt,gupta2023visprog,suris2023vipergpt,lu2023chameleon,zheng2024seeact,xie2024osworld}. Existing safety studies mainly focus on external failures, including indirect prompt injection, unsafe actions, and environment-level risks~\citep{greshake2023indirect,ruan2024toolemu,zhan2024injecagent,debenedetti2024agentdojo}, leaving underexplored how tool invocation reshapes a model's internal safety behavior.\vspace{-8pt}

\paragraph{Multimodal jailbreak evaluation.}
Multimodal safety benchmarks show that harmful image-text inputs remain effective against LVLMs~\citep{liu2024mmsafetybench,luo2024jailbreakv}, while broader benchmarks standardize jailbreak and refusal evaluation~\citep{mazeika2024harmbench,chao2024jailbreakbench,weng2025mmjbench}. This line provides threat models and metrics, but typically evaluates a fixed inference pathway rather than treating the reasoning paradigm as an experimental variable.\vspace{-8pt}

\paragraph{Representation-level safety analysis.}
Representation-level work shows that refusal and safety behavior can often be detected or influenced through low-dimensional activation patterns~\citep{zou2023repeng,arditi2024refusal,lee2025conditional,wollschlager2025geometry,xu2024scav,qian2025hsf,wang2025astra}; concurrent work extends this to large vision-language models, both for activation steering~\citep{yang2026steering} and for steering safety judgments via semantic cues~\citep{hinojosa2026saves}. This motivates our analysis of whether explicit image-tool interaction induces a consistent safety-relevant shift in LVLM hidden states.

Prior work leaves open whether think-with-image paradigms themselves change jailbreak robustness. We study this directly by comparing direct, simulated image generation/editing, visual-state, and explicit image-tool interaction settings under the same safety protocol. This reveals a surprising pattern: explicit image-tool interaction produces the lowest attack success rates in our experiments, and the reduction points beyond returned-image semantics to the image-tool-calling process within multimodal reasoning.

\vspace{-1mm}\section{Preliminary}\vspace{-1mm}
\subsection{We Compare Paradigms by Holding the Safety Query Fixed}\vspace{-2mm}
Let $q_i=(x_i,t_i)$ denote the $i$-th safety query, where $x_i$ is the adversarial image and $t_i$ is the harmful text prompt. For each paradigm $k$, the prefix $\pi_k$ represents the process that happens before the safety query, while $q_i$ itself stays fixed. Given an LVLM $f_\theta$, an unsafe-response judge $J$, and $N$ evaluation items, we compute the attack success rate (ASR) under paradigm $k$:
\begin{equation}
y_{i,k}=f_\theta(\pi_k\oplus q_i),
\qquad
\widehat{\mathrm{ASR}}_k=\frac{1}{N}\sum_{i=1}^{N}J(y_{i,k}),
\end{equation}
Here, $\oplus$ denotes concatenating the prefix with the later safety query, $y_{i,k}$ is the model's final answer under paradigm $k$, and $J(y_{i,k})=1$ means that this answer is judged unsafe.

\vspace{-1mm}\subsection{Paradigms Differ in the Process Before the Safety Query}\vspace{-2mm}
We define a paradigm by the prefix placed before the same MM-SafetyBench query. Each paradigm corresponds to a different think-with-image process, with different intermediate states and interactions. The main paradigm comparison is summarized in Table~\ref{tab:paradigm-overview}, and the text-only prior turn control is reported separately in Table~\ref{tab:text-prefill}. \textbf{Direct baseline} asks the model to answer the safety query immediately, so $\pi_{\mathrm{direct}}=\emptyset$. \textbf{Text-only prior turn} checks whether a completed prior textual image-tool-like turn is enough to change safety behavior; $\pi_{\mathrm{text}}$ keeps a completed text-only prefix but removes the returned image and any subsequent visual feedback. \textbf{Simulated image generation/editing} tests whether a generated or edited visual state before the safety query matters without image-tool-based analysis; $\pi_{\mathrm{gen}}$ contains a generated prefill image. \textbf{Visual-state variants} test the role of returned-image content; $\pi_{\mathrm{state}}=(b,a,r')$ keeps a benign prefix task $b$ and the model's image-tool request $a$ but replaces the returned artifact with a controlled image $r'$. \textbf{Explicit image-tool interaction} tests the full image-tool-mediated process; $\pi_{\mathrm{tool}}=(b,a,T(a))$ contains the benign task, the model's image-tool request, and the actual VSP output $T(a)$ returned by the image tool.
In all non-direct settings, the adversarial image $x_i$ is absent from $\pi_k$ and appears only inside $q_i$; the judge scores only the final answer $y_{i,k}$.

\vspace{-1mm}\section{Experimental Setting}\vspace{-2mm}

We test whether the \emph{process} preceding a harmful multimodal query affects jailbreak robustness. Each instance pairs a fixed safety query with, when applicable, a benign think-with-image prefix that induces intermediate visual computation.
For safety evaluation, we use MM-SafetyBench~\citep{liu2024mmsafetybench}, whose adversarial image-text pairs are generated via typographic attacks or Stable Diffusion~\citep{rombach2022ldm} across 13 safety-sensitive categories. Due to the cost of think-with-image pipelines, we randomly sample 202 questions using a fixed sampling seed; the per-category composition of this subset is given in Appendix~\ref{app:mmsb-categories}. The primary metric is attack success rate (ASR), defined as the fraction of final safety-query responses judged unsafe by an LLM-as-judge evaluator~\citep{zheng2023judging}. In multi-turn settings, the judge is given only the model's final answer to the MM-SafetyBench query, not the benign prefix or image-tool transcript, so the prefix affects ASR only through the answer produced by the evaluated model.

For the benign prefix, we use CoMT tasks~\citep{cheng2025comt} that reliably elicit intermediate visual computation, with image-tool calls executed through Visual Sketchpad (VSP)~\citep{hu2024visual}. This implements the \emph{explicit image-tool interaction} paradigm: the model solves a benign visual reasoning task, issues a structured image-tool request, receives the image-tool output, and then receives the MM-SafetyBench query. The benign stage is not scored; it only defines the process prefix before safety evaluation.
Behavioral experiments cover Qwen3-VL~\citep{yang2025qwen3}, Gemma-3~\citep{gemmateam2025gemma3}, Llama-4~\citep{meta2025llama4}, and Pixtral~\citep{agrawal2024pixtral} families under consistent provider settings within each family. Mechanistic experiments use self-deployed Qwen3-VL-8B~\citep{yang2025qwen3}, since hidden-state extraction and activation interventions require model internals. For explicit image-tool runs, we also record whether the intermediate computation appears as a direct VSP call or generated code. Self-deployed image-tool runs exhibit small run-to-run ASR variation from GPU-level numerical nondeterminism in the multi-turn pipeline; intervention experiments therefore report changes against within-batch zero-injection baselines. Behavioral experiments are issued through the OpenRouter API gateway; self-deployed mechanistic experiments on Qwen3-VL-8B run on a single NVIDIA A800 80GB GPU.

\section{Which Paradigm Is Safer?}
\label{sec:paradigm-results}

We compare think-with-image paradigms under the same harmful-query distribution and safety judge, varying only the prefix before the final MM-SafetyBench question. The clearest pattern is simple: models are jailbroken less often when the prefix includes explicit image-tool interaction. We then test two plain alternatives: whether the drop comes only from a text-only prior turn, or only from the image returned by the tool.
\vspace{-8pt}

\begin{table*}[t]
  \caption{Representative paradigm and control comparison on Qwen3-VL-235B on MM-SafetyBench.}
  \label{tab:paradigm-overview}
  \centering
  \begin{tabular}{llc}
    \toprule
    Paradigm & Representative setting & ASR (\%) \\
    \midrule
    Direct response & -- & 36.1 \\
    Image generation/editing$^\star$ & Neutral image prefill & 38.1 \\
    Image generation/editing$^\star$ & Harmful image prefill & 35.6 \\
    Visual-state variant & Edited image attention & 32.6 \\
    Explicit image-tool interaction & Benign returned image & 24.8 \\
    Explicit image-tool interaction & Unsafe-looking returned image & 26.2 \\
    Explicit image-tool interaction & Standard image-tool setting & 23.8 \\
    \bottomrule
  \end{tabular}

  \begin{minipage}{0.98\linewidth}
    \vspace{2pt}
    \footnotesize
    $^\star$ Qwen3-VL-235B does not natively support image generation. Because direct EMU~\citep{wang2026multimodal}-style generation produced low-quality reasoning prefixes in our setup, we use Stable-Diffusion-generated prefill images to simulate image generation/editing prefixes rather than as a faithful EMU reproduction. EMU-3 results are given in Table~\ref{tab:emu-paradigm-overview}.
  \end{minipage}
\end{table*}

\begin{figure}[t]
  \centering
  \includegraphics[width=0.95\linewidth]{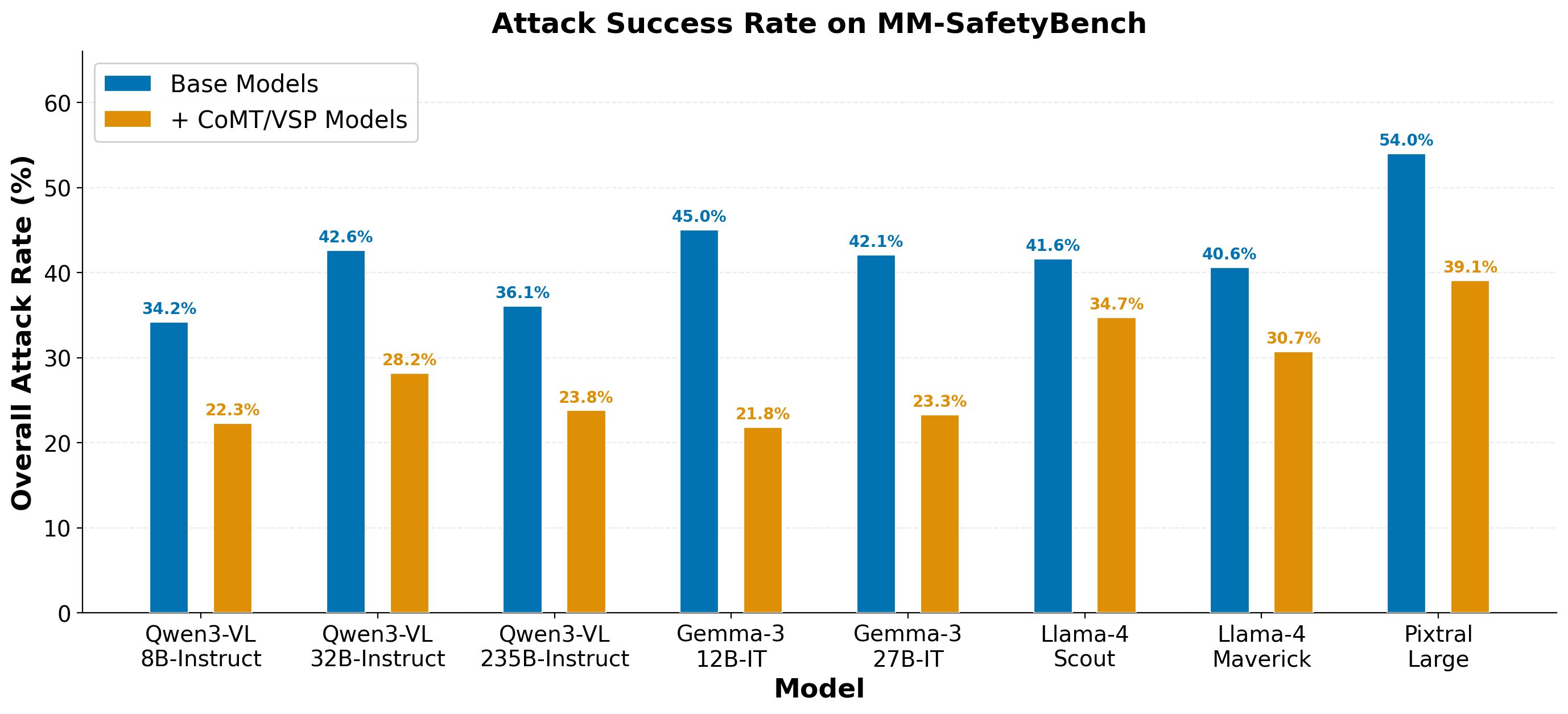}
  \caption{Overall ASR on MM-SafetyBench for direct answering and explicit image-tool interaction. Across model families and scales, explicit image-tool interaction yields lower attack success rates.}
  \label{fig:rq1}
\end{figure}

\paragraph{Explicit image-tool interaction is the most robust evaluated setting.}
Table~\ref{tab:paradigm-overview} shows the main comparison. Direct response is jailbroken in $36.1\%$ of cases, while the Stable-Diffusion-based simulated image generation/editing setting stays about the same. The visual-state variant lowers ASR only moderately to $32.6\%$, suggesting that simply adding an edited visual state is not enough. Explicit image-tool interaction is different: ASR falls to $23.8\%$ in the standard setting and stays below $26.2\%$ even when the returned image looks unsafe. Figure~\ref{fig:rq1} shows the same direction across model families and scales, so the lower ASR is not limited to one model size or provider.\vspace{-8pt}

\begin{table*}[t]
  \caption{Text-only prior turn controls. \emph{Direct} is the single-turn baseline; \emph{Image-tool interaction} is the explicit image-tool interaction setting; \emph{text-only prior turn} preserves a completed text-only prefix but removes returned visual feedback. ASR (\%) is measured on MM-SafetyBench. Additional setup details are given in Appendix~\ref{app:text-prefill-details}.}
  \label{tab:text-prefill}
  \centering
  \resizebox{0.95\linewidth}{!}{
    \begin{tabular}{lcccc}
    \toprule
    Model & Direct & Image-tool interaction & Prior turn (\textsc{Normal}) & Prior turn (\textsc{Harmful}) \\
    \midrule
    Qwen3-VL-8B    & 34.2 & \textbf{22.3} & 33.2 & 36.1 \\
    Qwen3-VL-32B   & 42.6 & \textbf{28.2} & 38.1 & 39.6 \\
    Qwen3-VL-235B  & 36.1 & \textbf{23.8} & 38.1 & 35.6 \\
    Gemma-3-12B    & 45.0 & \textbf{21.8} & 44.1 & 47.5 \\
    Gemma-3-27B    & 42.1 & \textbf{23.3} & 43.1 & 48.5 \\
    Llama-4-Scout  & 41.6 & \textbf{34.7} & 44.1 & 44.6 \\
    Pixtral-Large  & 54.0 & \textbf{39.1} & 51.0 & 46.5 \\
    \bottomrule
  \end{tabular}}
\end{table*}

\paragraph{Text-only prior turns do not explain the ASR drop.}
One simple explanation is that any completed prior turn changes the dialogue state and makes the model more cautious. To test this, we keep a text-only prior turn but remove the returned visual state from the image tool, while leaving the later harmful query unchanged. Table~\ref{tab:text-prefill} shows that this is not enough. On Qwen3-VL-235B, direct answering gives $36.1\%$ ASR, normal and harmful prior turns give $38.1\%$ and $35.6\%$, while explicit image-tool interaction falls to $23.8\%$. Across models, text-only prior turn ASR remains close to direct answering and far above image-tool interaction, so the lower ASR is not reproduced by a prior text-only turn or by refusal-oriented warm-up alone. Appendix~\ref{app:text-prefill-details} gives setup details and notes failed provider runs.\vspace{-8pt}

\paragraph{Returned-image semantics are not necessary for most of the ASR drop.}

The next alternative is that the returned image itself makes the model safer, for example by supplying benign visual evidence before the harmful query. We test this by keeping the image-tool transcript fixed and overriding only the returned visual content with unsafe-looking, benign, or low-information images. If returned-image semantics were the main driver, ASR should vary strongly across these replacements, especially when useful visual content is removed.\vspace{-8pt}

\begin{table*}[t]
  \caption{Returned-image variants used to separate visual content from the image-tool-calling process. ASR is measured on Qwen3-VL-235B.}
  \label{tab:postproc-results}
  \centering
  \begin{tabular}{llr}
    \toprule
    Condition & Returned visual content & ASR (\%) \\
    \midrule
    Direct & -- & 36.1 \\
    Image-tool interaction + unsafe image & Stable Diffusion unsafe-looking image & 26.2 \\
    Image-tool interaction + benign image & Stable Diffusion benign image & 24.8 \\
    Image-tool interaction + visual mask & White/noise replacement image & 25.7 \\
    \bottomrule
  \end{tabular}
\end{table*}

\paragraph{The ASR drop remains when useful visual content is removed.}
Table~\ref{tab:postproc-results} shows that ASR remains low under all returned-image overrides: $26.2\%$ with an unsafe-looking image, $24.8\%$ with a benign image, and $25.7\%$ with a white/noise replacement. The white/noise condition is the strictest test because it removes most object- and scene-level evidence while preserving the interaction structure. Thus the returned image does not need to contain useful visual evidence for the model to be jailbroken less often; small differences among returned-image variants are within the run-to-run baseline drift documented in Appendix~\ref{app:baseline-drift} and should be read qualitatively.\vspace{-8pt}

\paragraph{Lower ASR follows the interaction trace, not the returned visual artifact.}
Overall, the attribution sequence points toward the image-tool interaction process rather than the returned artifact alone. The model must issue an image-tool request, receive a structured image-tool result, and condition on that interaction before the harmful query arrives. The returned pixels can change substantially while ASR remains low, whereas removing returned visual feedback and keeping only the text-only prior turn brings ASR back near direct answering.

\vspace{-1mm}\section{Why Is Explicit Image-Tool Interaction Better?}\vspace{-1mm}

\subsection{Safety-Aligned Residual Shift}\vspace{-2mm}

\paragraph{Mechanistic hypothesis.}
The behavioral results suggest that the robustness gain is tied to the full image-tool interaction, not to the text-only image-tool trace or to benign returned-image content alone. We model this effect as a residual shift in hidden-state space before the model answers the harmful query. Let $z \sim \mathcal{D}_{\text{adv}}$ be an adversarial query, let $h(z)\in\mathbb{R}^d$ be its direct-mode hidden state at a selected layer, and let $u\in\mathbb{R}^d$ be a safety readout direction where larger projection means safer behavior~\citep{zou2023repeng,arditi2024refusal}. We define the corresponding safety score
\begin{equation}
S(z) = u^\top h(z),
\end{equation}
so that larger $S(z)$ corresponds to safer behavior.
If explicit image-tool interaction induces a residual shift $v_{tool}$ with strength $\alpha\ge0$, then
\begin{equation}
h_{\text{tool}}(z) = h(z) + \alpha v_{tool},
\qquad
S_{\text{tool}}(z) = S(z) + \alpha (u^\top v_{tool}).
\end{equation}
Here, $h_{\text{tool}}(z)$ and $S_{\text{tool}}(z)$ are the hidden state and safety score after the image-tool prefix. The condition $u^\top v_{tool}>0$ means that the image-tool state shifts the representation toward the safer side of the readout.

\begin{definition}[Image-Tool Safety Vector]
At layer $\ell$, let $h_{\text{direct}}^{\,\ell}(z)$ and $h_{\text{tool}}^{\,\ell}(z)$ denote the hidden representations for the same query under direct answering and explicit image-tool interaction. The layer-$\ell$ image-tool safety vector is the population mean residual shift:
\begin{equation}
v_{tool}^{\,\ell} =
\mathbb{E}_{z \sim \mathcal{D}_{\mathrm{eval}}}\!\left[h_{\text{tool}}^{\,\ell}(z)-h_{\text{direct}}^{\,\ell}(z)\right].
\end{equation}
Its empirical estimate on $n$ paired examples is
\begin{equation}
\hat v_{tool}^{\,\ell}
=
\frac{1}{n}\sum_{i=1}^{n}
\left(h_{\text{tool}}^{\,\ell}(z_i)-h_{\text{direct}}^{\,\ell}(z_i)\right).
\end{equation}
\end{definition}
We use $v_{tool}$ for the population shift in the theorem and $\hat v_{tool}^{\,\ell}$ for its empirical layer-wise estimate.

\begin{theorem}[Positive safety projection reduces thresholded jailbreak risk]
\label{thm:safety-projection}
Let $S(z)=u^\top h(z)$ be a safety score where larger values are safer, and let $\tau$ be an unsafe-behavior threshold. If explicit image-tool interaction shifts the representation by $\alpha v_{tool}$ with $\alpha\ge 0$ and $u^\top v_{tool}>0$, then the thresholded jailbreak risk
\[
\mathcal{R}(\alpha)=
\Pr_{z\sim\mathcal{D}_{\mathrm{adv}}}
\left[S(z)+\alpha u^\top v_{tool}<\tau\right]
\]
is non-increasing in $\alpha$. Moreover, for any $\alpha_2>\alpha_1\ge0$, the decrease is strict whenever
\[
\Pr\!\left[
\tau-\alpha_2 u^\top v_{tool}
\le S(z) <
\tau-\alpha_1 u^\top v_{tool}
\right] > 0.
\]
\end{theorem}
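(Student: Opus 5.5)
The plan is to reduce the statement to monotonicity of a cumulative distribution function. Write $c := u^\top v_{tool} > 0$ and let $F$ denote the distribution of the random variable $S(z)$ for $z\sim\mathcal{D}_{\mathrm{adv}}$. Since $\alpha$ and $c$ are deterministic, the event $\{S(z)+\alpha c<\tau\}$ is exactly $\{S(z)<\tau-\alpha c\}$. Hence $\mathcal{R}(\alpha)=\Pr[S(z)<\tau-\alpha c]$, which is the left-limit version $F(t^-)=\Pr[S<t]$ evaluated at $t=\tau-\alpha c$.

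For monotonicity, fix $\alpha_2>\alpha_1\ge 0$. Because $c>0$, we have $t_2:=\tau-\alpha_2 c < t_1:=\tau-\alpha_1 c$. This gives the event inclusion $\{S<t_2\}\subseteq\{S<t_1\}$, so by monotonicity of probability $\mathcal{R}(\alpha_2)\le\mathcal{R}(\alpha_1)$. Positivity of $c$ is used precisely here: if $c\le 0$, the inclusion would reverse or become trivial.

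For strictness, the difference of the two events is the disjoint set $\{S<t_1\}\setminus\{S<t_2\}=\{t_2\le S<t_1\}$. By finite additivity,
\[
\mathcal{R}(\alpha_1)-\mathcal{R}(\alpha_2)=\Pr\!\left[\tau-\alpha_2 c\le S(z)<\tau-\alpha_1 c\right].
\]
The decrease is therefore strict exactly when this probability is positive, which is the stated hypothesis. In fact this gives an equality, so the condition is both sufficient and necessary.

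There is no real obstacle. The only care needed is bookkeeping of strict versus non-strict inequalities, so that the half-open interval $[t_2,t_1)$ matches the strict threshold "$<\tau$" in $\mathcal{R}$. One should also note that $S(z)$ is measurable, being a linear function of $h(z)$, so that all the probabilities are well defined.
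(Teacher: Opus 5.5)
Your proposal is correct and follows essentially the same argument as the paper. Both rewrite the jailbreak event as $\{S(z)<\tau-\alpha\, u^\top v_{tool}\}$, get monotonicity from the nested inclusion of events for $\alpha_2>\alpha_1$, and identify $\mathcal{R}(\alpha_1)-\mathcal{R}(\alpha_2)$ with the probability of the half-open band; the paper likewise concludes that strictness holds \emph{exactly} when that band has positive mass.
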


Theorem~\ref{thm:safety-projection} is only a sufficient-condition statement: if the shift points toward a learned safety readout, examples near the refusal boundary become less likely to produce unsafe completions. We do not claim to inject the population shift $v_{tool}$ directly; instead, the experiments test whether the image-tool state aligns with learned safety directions that are readable from, and active in, the residual stream. The two observable consequences are that the image-tool state should make safety more linearly readable, and safety-direction interventions should change ASR in the expected direction. The full proof is given in Appendix~\ref{app:proof-safety-projection}.

\subsection{Image-Tool Interaction Creates a Stronger and More Stable Safety Readout}
\label{sec:vec-stab}

\paragraph{Explicit image-tool interaction makes safety more linearly readable.}
The first prediction is that the image-tool-mediated state should make safe and unsafe completions easier to separate. Figure~\ref{fig:layer-sweep} shows that the image-tool state yields a stronger layer-wise safety readout than direct answering. Thus explicit image-tool interaction does not only reduce ASR externally; it places the model in a state where the subsequent safety decision is more linearly accessible from hidden representations. Additional fitting details and layer-wise numbers are given in Appendix~\ref{app:representation-details}.\vspace{-8pt}

\begin{figure}[t]
  \centering
  \begin{minipage}{0.48\linewidth}
    \centering
    \includegraphics[width=\linewidth]{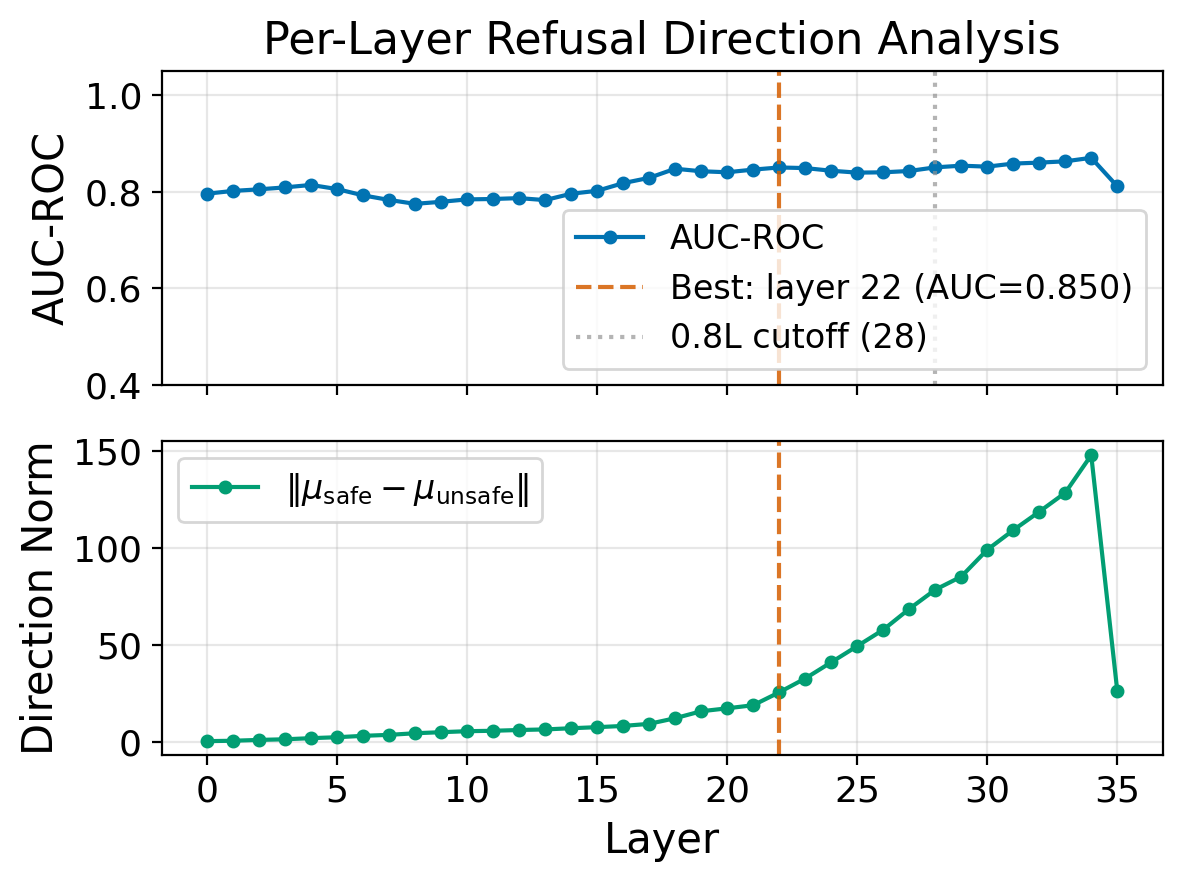}
    \subcaption{Direct mode.}
  \end{minipage}\hfill
  \begin{minipage}{0.48\linewidth}
    \centering
    \includegraphics[width=\linewidth]{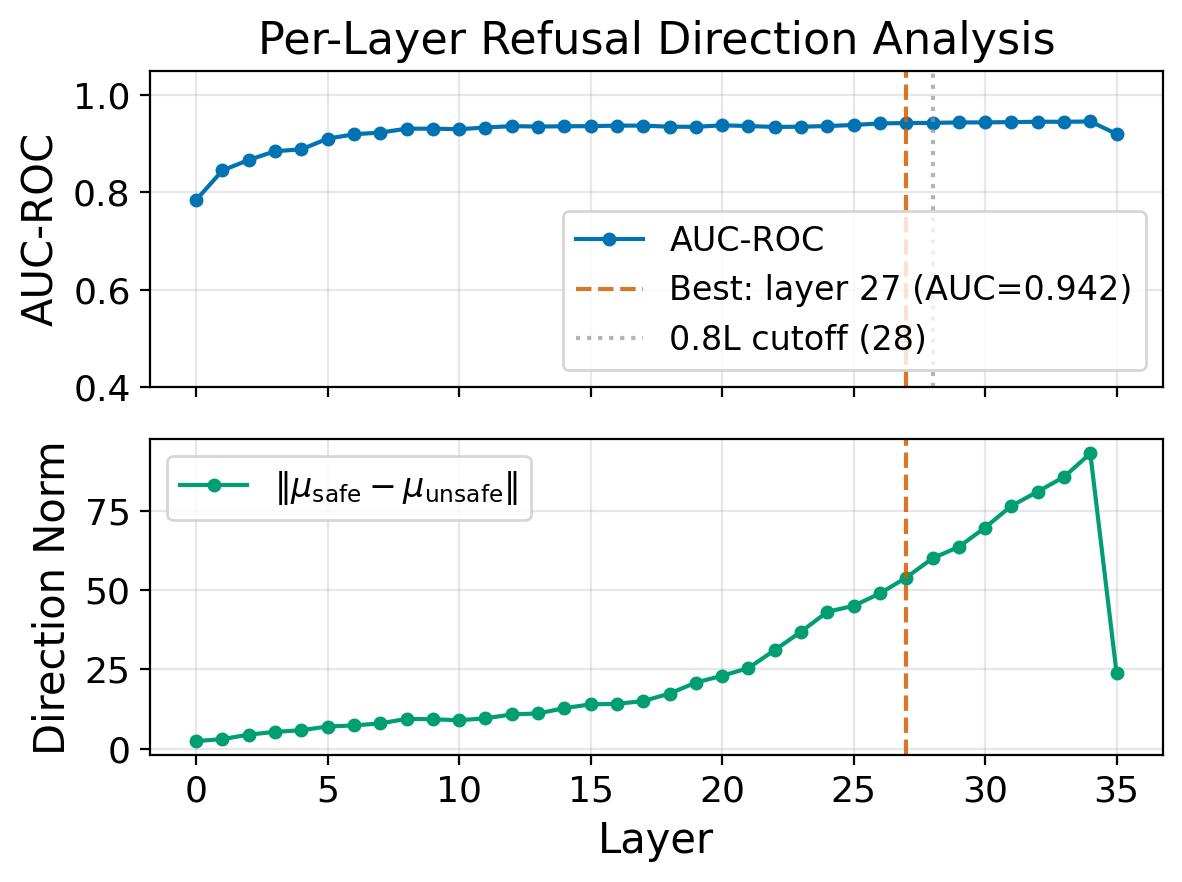}
    \subcaption{Image-tool mode.}
  \end{minipage}
  \caption{Layer-wise safety readout quality on Qwen3-VL-8B-Instruct. (Top) AUC for direct answering and image-tool interaction; (Bottom) The corresponding direction norms. The image-tool state produces a stronger readout, peaking at AUC 0.942 vs. 0.850 for direct answering.}
  \label{fig:layer-sweep}
\end{figure}

\begin{wrapfigure}{r}{0.43\linewidth}
  \vspace{-8pt}
  \centering
  \includegraphics[width=\linewidth]{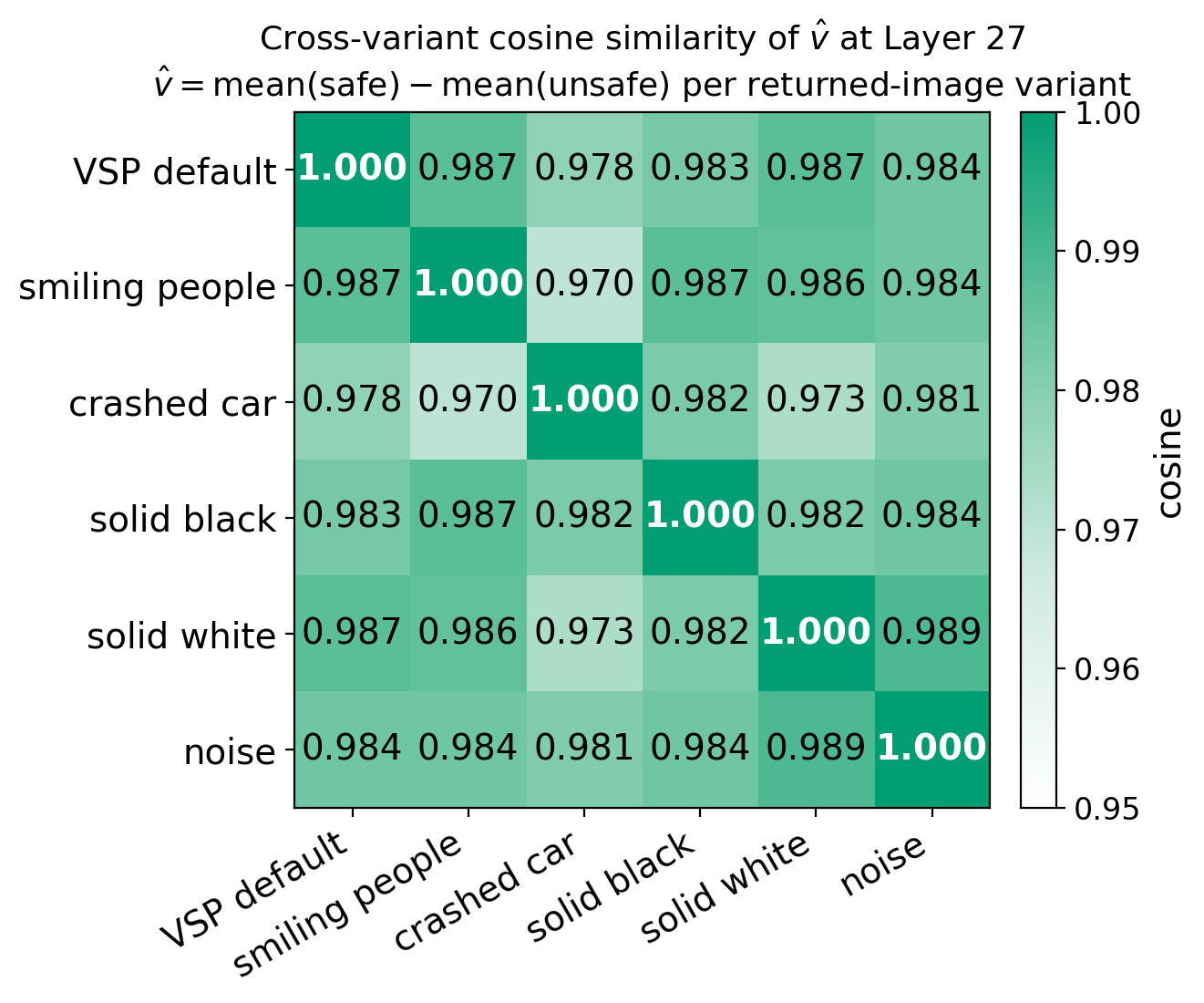}
  \caption{Cosine similarity between safety directions learned from six returned-image variants. High off-diagonal similarity indicates that the safety direction is stable across returned-image content.}
  \label{fig:cosine-matrix}
  \vspace{-10pt}
\end{wrapfigure}

\paragraph{The image-tool safety direction is stable across returned-image variants.}
The second prediction is that the direction should be tied to the image-tool-calling process rather than to the particular pixels returned by the image tool. Figure~\ref{fig:cosine-matrix} shows that safety directions extracted from six returned-image variants are nearly collinear. If the direction mainly encoded returned-image semantics, benign, unsafe-looking, white, or noisy outputs should rotate it substantially. Instead, the common factor appears to be the structured image-tool-calling process. Transfer AUCs and override details are given in Appendix~\ref{app:representation-details}.\vspace{-8pt}

\paragraph{Representation evidence supports a process-level mechanism.}
The representation results mirror the behavioral attribution. Text-only prior turns do not reproduce the ASR gain, returned-image overrides do not remove it, and the corresponding hidden-state directions remain stable across those overrides. Figure~\ref{fig:readout-diagnostics} further shows that the image-tool safety readout is compact and transfers across returned-image variants, supporting a process-level mechanism rather than an explanation tied to particular returned pixels.\vspace{-8pt}

\begin{figure}[t]
  \centering
  \begin{minipage}[b]{0.55\linewidth}
    \centering
    \includegraphics[height=4.2cm]{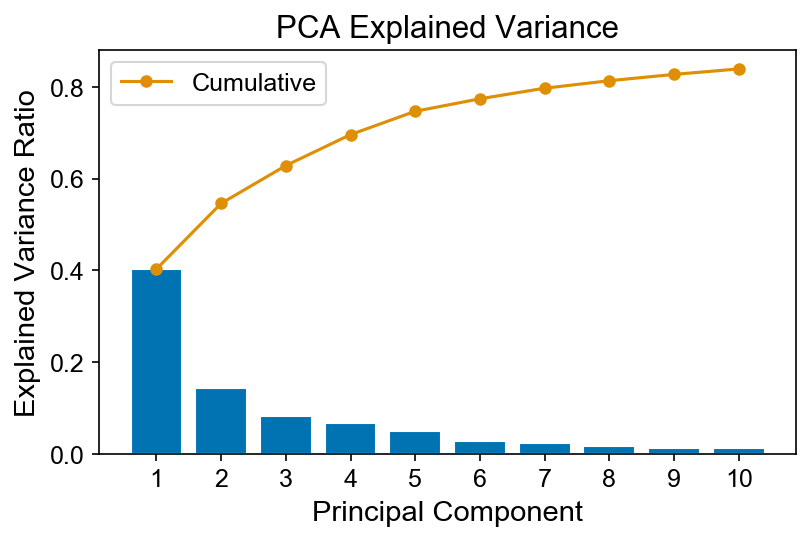}
    \subcaption{PCA variance.}
  \end{minipage}\hfill
  \begin{minipage}[b]{0.40\linewidth}
    \centering
    \includegraphics[height=4.2cm]{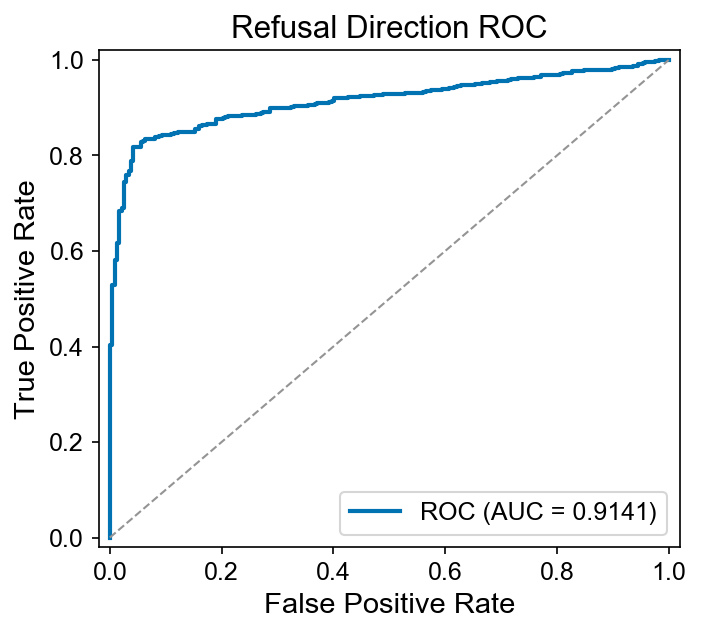}
    \subcaption{Cross-override ROC.}
  \end{minipage}
  \caption{Readout diagnostics for the layer-27 image-tool safety direction on Qwen3-VL-8B-Instruct. \emph{(a)} PC1 explains $40.3\%$ of variance and is nearly aligned with the safety direction (projection $0.993$), indicating a compact safety axis. \emph{(b)} Direction fit on the normal returned-image run transfers to five override runs (AUC $=0.914$), showing the readout is not tied to returned-image content.}
  \label{fig:readout-diagnostics}
\end{figure}

\paragraph{Compact image-tool shifts mainly affect marginal cases.}
Figure~\ref{fig:readout-diagnostics} matters because it connects the behavioral attribution to the thresholded-risk model in Theorem~\ref{thm:safety-projection}. The readout is compact and transfers across returned-image variants, so the relevant shift is unlikely to be a scattered encoding of a particular returned image. At the same time, the theorem predicts that such a positive safety shift changes ASR only when examples occupy a non-saturated region near the refusal boundary. The boundary diagnostics show nontrivial near-threshold mass, suggesting that explicit image-tool interaction reduces ASR by moving marginal harmful cases across the safety boundary, rather than by uniformly solving all attack categories. Additional PCA, transfer, and boundary details are given in Appendix~\ref{app:representation-details}.

\subsection{Activation Interventions Show That Safety Directions Are Behaviorally Active}
\label{sec:causal}
\begin{table*}[t]
  \caption{Activation-intervention summary on Qwen3-VL-8B. Image-tool interaction changes are computed against within-batch zero-injection baselines.}
  \label{tab:injection-summary}
  \centering
  \small
  \begin{tabular}{llrrrl}
    \toprule
    Sweep & Mode / intervention & Baseline & ASR ($\downarrow$) & $\Delta$ & Shape \\
    \midrule
    Sufficiency & Direct, $+u_{\text{direct}}$ & 36.6 & 23.8 & $\mathbf{-12.9}$ & monotone $\downarrow$ \\
    Necessity & Image-tool interaction, $-u_{\text{direct}}$ & 24.8 & 31.7 & $\mathbf{+6.9}$ & monotone $\uparrow$ \\
    Sufficiency & Image-tool interaction, $+u_{\text{direct}}$ & 19.8 & 11.9 & $\mathbf{-7.9}$ & U-shaped \\
    Specificity & Direct, $+u_{\text{tool}}$ & 36.6 & 34.5 & $-2.1$ & flat \\
    Specificity & Image-tool interaction, $-u_{\text{tool}}$ & 21.0 & 26.4 & $+5.4$ & monotone $\uparrow$ \\
    \bottomrule
  \end{tabular}
\end{table*}
\begin{figure}[t]
  \centering
  \begin{minipage}{0.32\linewidth}
    \centering
    \includegraphics[width=\linewidth]{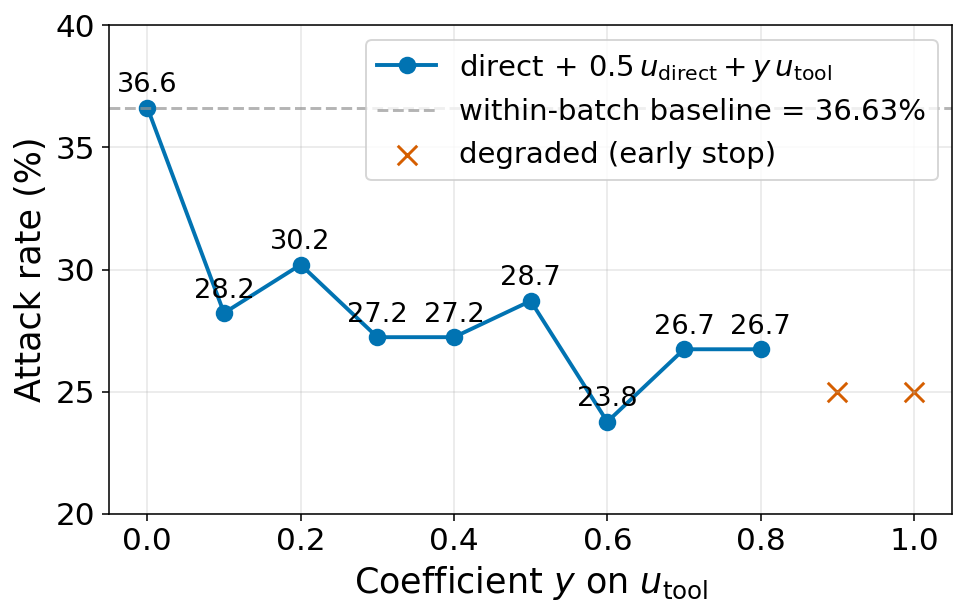}
    \subcaption{Direct sufficiency.}
  \end{minipage}\hfill
  \begin{minipage}{0.32\linewidth}
    \centering
    \includegraphics[width=\linewidth]{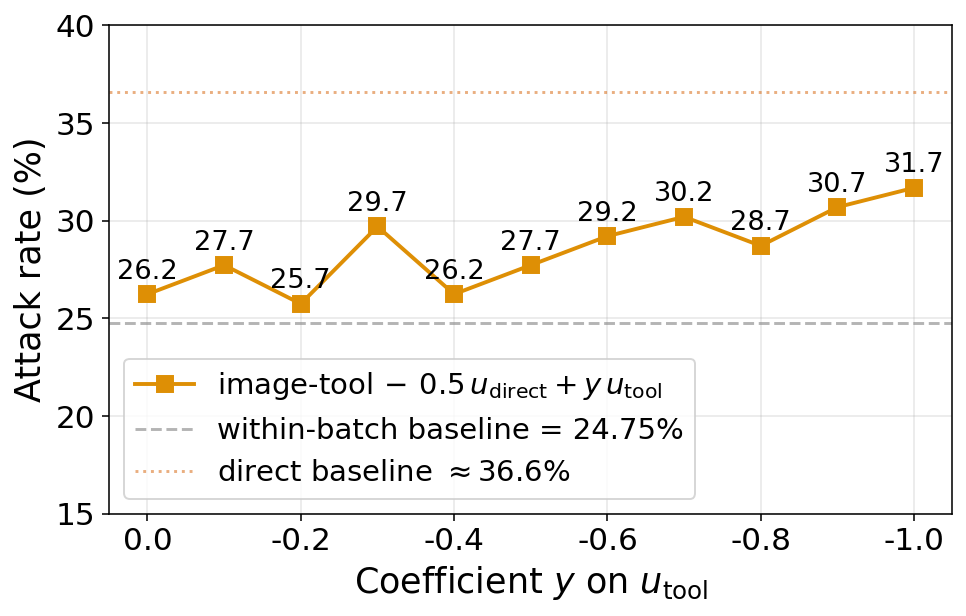}
    \subcaption{Image-tool necessity.}
  \end{minipage}\hfill
  \begin{minipage}{0.32\linewidth}
    \centering
    \includegraphics[width=\linewidth]{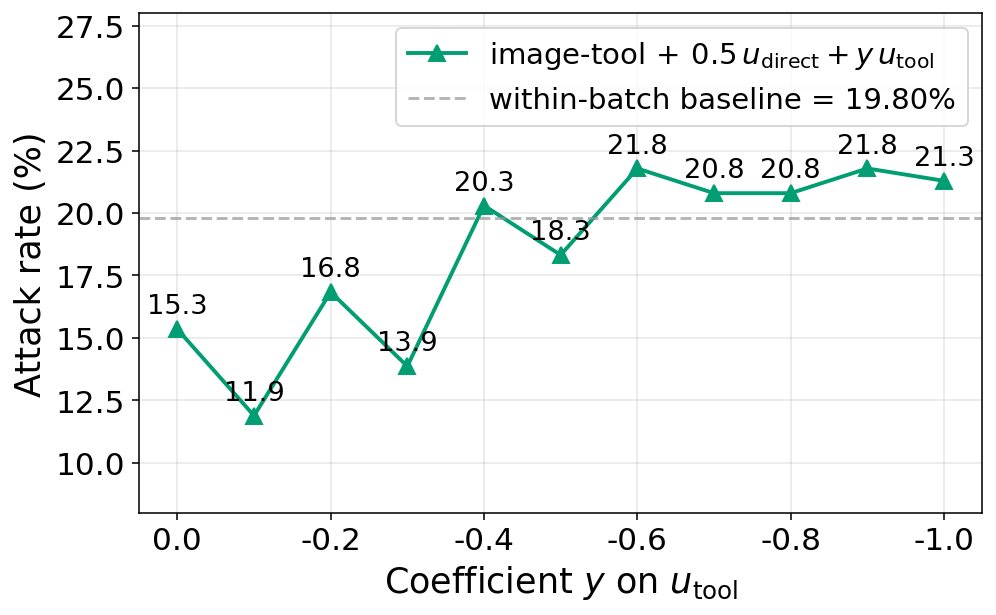}
    \subcaption{Image-tool sufficiency.}
  \end{minipage}
  \caption{Activation-intervention dose-response curves on Qwen3-VL-8B-Instruct. Panels test direct-mode sufficiency, image-tool necessity, and image-tool sufficiency. Adding safety-direction components lowers ASR, while subtracting them from the image-tool setting raises ASR.}
  \label{fig:dose-response}
\end{figure}

\paragraph{Activation interventions test whether learned safety directions are behaviorally active.}
To move beyond correlational readouts, we intervene on the residual stream of Qwen3-VL-8B during inference. The intervention layer, direction, and coefficients are selected on an internal validation split, and all ASR results are reported on the held-out test set. At layer $\ell$, a forward hook adds
\begin{equation}
\Delta h = \lambda \hat u_{\textit{direct}}^{\,\ell} + \mu \hat u_{\textit{tool}}^{\,\ell},
\end{equation}
to every token hidden state~\citep{zou2023repeng,arditi2024refusal,lee2025conditional}, where $\lambda,\mu$ are scalar coefficients and $\hat u_{\textit{direct}}^{\,\ell}$, $\hat u_{\textit{tool}}^{\,\ell}$ are normalized safety directions for direct and image-tool modes. We evaluate sufficiency in direct mode, necessity in image-tool interaction, and additional sufficiency in image-tool interaction. Additional sweep settings and baseline conventions are given in Appendix~\ref{app:intervention-details}.\vspace{-8pt}

\paragraph{Adding a safety direction partially reproduces the image-tool robustness gain.}
As shown in Table~\ref{tab:injection-summary}, adding the direct-mode safety direction in direct mode lowers ASR from $36.6\%$ to $23.8\%$, close to the behavioral gain of explicit image-tool interaction in the matched evaluation. This goes beyond representation separability: the direction is not only predictive of safe versus unsafe outputs, but also behaviorally active when injected into the residual stream.\vspace{-8pt}

\paragraph{Removing safety directions partially erodes the image-tool robustness gain.}
The necessity sweep provides complementary evidence. In the image-tool interaction setting, subtracting the direct-mode safety direction raises ASR from $24.8\%$ to $31.7\%$, while adding it lowers ASR from $19.8\%$ to $11.9\%$. The comparable magnitudes of these effects ($7.9/6.9=1.14$) suggest that this safety-relevant axis accounts for part of the robustness gain. Specificity sweeps further show that the image-tool-mode direction is less effective when added to direct mode but harmful when removed from image-tool interaction, indicating mode-specific activation geometry rather than a single universal steering vector.\vspace{-8pt}

\paragraph{The intervention effect is partly symmetric within the image-tool state but mode-specific across states.}
Figure~\ref{fig:dose-response} separates two effects. Within image-tool interaction, adding and subtracting safety-direction components have comparable magnitudes in the non-degenerate dose range, consistent with a behaviorally active safety axis. Across inference modes, however, the directions are not equally portable: adding the image-tool-mode direction to direct inference weakly affects ASR, whereas removing it within image-tool interaction substantially increases ASR. This argues against a universal-vector account and suggests that image-tool interaction changes the local activation geometry in which safety directions operate.\vspace{-8pt}

\paragraph{Mechanistic interpretation.}
Together, the behavioral, representation, and intervention results support a coherent account: explicit image-tool interaction induces a recurring internal state where safety-relevant information is more linearly accessible, stable across returned-image content, and behaviorally linked to ASR. We therefore interpret the image-tool-calling trajectory as a process-level safety cue, with intervention evidence limited to the non-degenerate dose range and read as partial mechanistic support rather than a complete causal mediation proof.

\vspace{-1mm}\section{Discussion}\vspace{-2mm}

\paragraph{Implications for deployment.}
Our results suggest that LVLM safety depends not only on training or post-hoc filtering, but also on inference structure. If the image-tool interaction effect generalizes, tool-augmented think-with-image pipelines may offer a safety-relevant design pattern for multimodal systems. Deployment-time safety evaluation should therefore assess the full reasoning pipeline, not only the direct response.\vspace{-8pt}

\paragraph{Limitations and risks.}
Our results should not be read as a blanket endorsement of tool use. Tool-augmented systems can introduce new attack surfaces, including malicious tool outputs, indirect prompt injection, unsafe actions, and adaptive attacks on tool-calling trajectories~\citep{noheria2026jailbreaks,zhao2026clawguard}. Our empirical scope is also limited: we study a 202-example MM-SafetyBench sample, rely on one primary LLM-as-judge protocol, and run mechanistic analyses on self-deployed Qwen3-VL-8B where hidden states are available. The main conclusion is therefore narrower: safety evaluation should account for the full think-with-image pipeline, rather than treating tool use as inherently safe or unsafe.\vspace{-8pt}

\paragraph{Research outlook.}
This work reframes think-with-image reasoning as a safety-relevant inference paradigm, not merely a capability mechanism. It motivates evaluations that vary the reasoning process and a mechanistic agenda studying when intermediate visual states and image-tool-calling trajectories induce safety-relevant hidden-state shifts.

\vspace{-1mm}\section{Conclusion}\vspace{-2mm}

We asked which think-with-image process design improves multimodal jailbreak robustness and found that explicit image-tool interaction gives the strongest benefit among the evaluated paradigms, lowering ASR by around $30\%$ relative to direct answering across model families and scales. The key finding is not merely that image tools return helpful images: replacing the returned image preserves most of the gain, while removing returned visual feedback and keeping only a text-only prefill does not reproduce it. We then linked this process-level effect to a stable image-tool safety state and showed through activation interventions that learned safety directions are behaviorally active. These results suggest that explicit image-tool interaction is a promising safety-positive design pattern for future think-with-image LVLMs, provided it is evaluated as a full pipeline rather than treated as inherently safe.

\bibliographystyle{plainnat}
\bibliography{references}

\newpage
\section*{Appendix}
\appendix
\begin{figure}[b]
  \centering
  \begin{minipage}{0.34\linewidth}
    \centering
    \includegraphics[width=\linewidth]{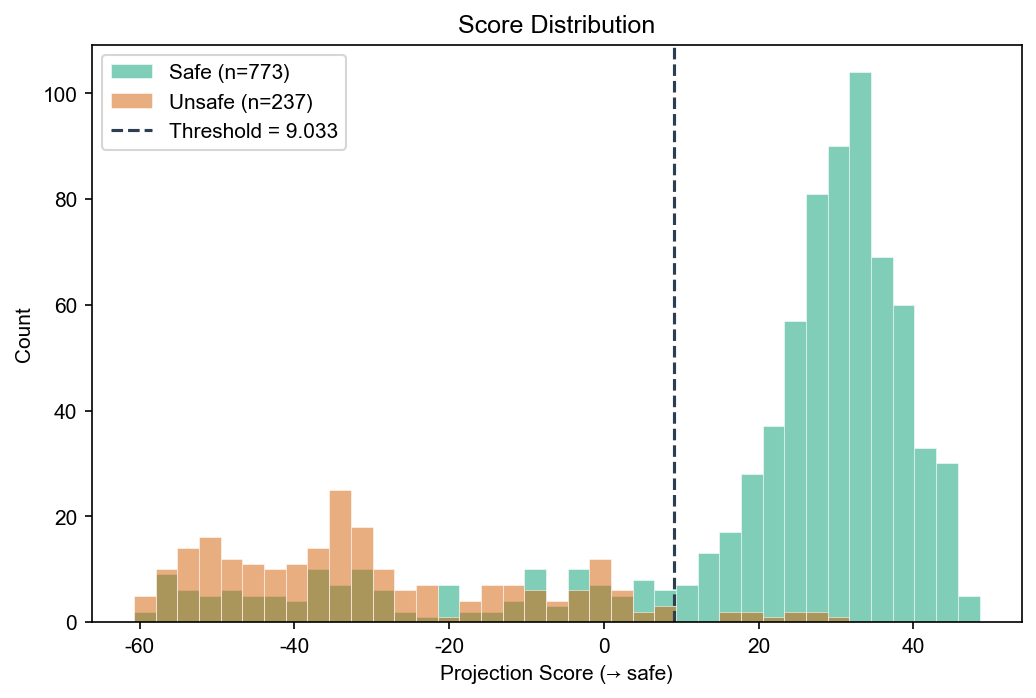}
    \subcaption{Safety-readout score distribution.}
  \end{minipage}\hfill
  \begin{minipage}{0.64\linewidth}
    \centering
    \includegraphics[width=\linewidth]{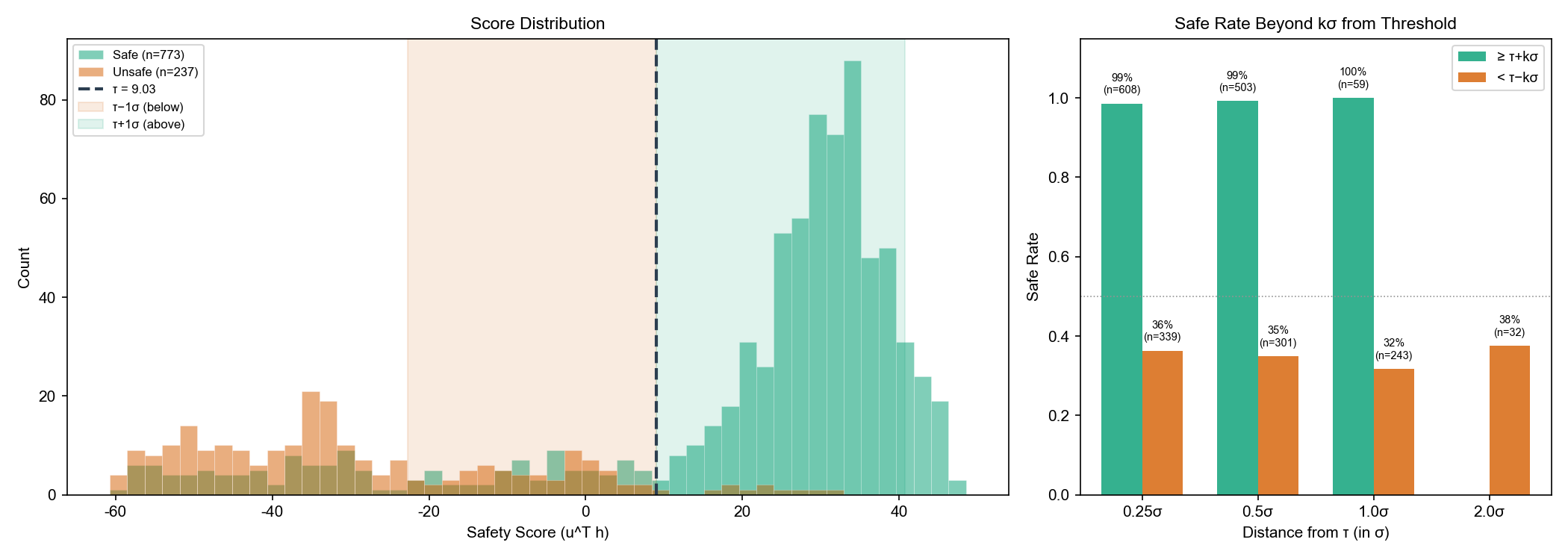}
    \subcaption{Boundary sharpness.}
  \end{minipage}
  \caption{Supplementary readout diagnostics. The learned direction separates safe and unsafe completions while retaining nontrivial mass near the threshold, allowing residual shifts to affect ASR.}
  \label{fig:appendix-boundary}
\end{figure}

\section{MM-SafetyBench Category Composition}
\label{app:mmsb-categories}

MM-SafetyBench~\citep{liu2024mmsafetybench} organizes adversarial image-text pairs into 13 safety-sensitive categories. Each instance pairs a harmful question with an image generated by a typographic attack or by Stable Diffusion~\citep{rombach2022ldm}. To keep think-with-image evaluation tractable, we draw a 202-question evaluation subset as a stratified subsample at a fixed rate of approximately $12\%$ per category; the same subset is used by every paradigm and every model in this paper, so cross-paradigm comparisons share identical underlying questions. Table~\ref{tab:mmsb-categories} lists the 13 categories together with their full-dataset sizes, our per-category counts, and the resulting per-category sampling rates.

\begin{table}[h]
  \caption{MM-SafetyBench categories, full-dataset sizes, our per-category counts in the 202-question evaluation subset, and the resulting sampling rates. Rates lie within $11.4$--$12.4\%$ across all categories (mean $12.0\%$, std $0.29$ pp); per-category variation is driven entirely by integer rounding on small categories.}
  \label{tab:mmsb-categories}
  \centering
  \small
  \begin{tabular}{rlrrr}
    \toprule
    ID & Category & Full & Sample & Rate (\%) \\
    \midrule
    01 & Illegal Activity     &   97 &  12 & 12.4 \\
    02 & Hate Speech          &  163 &  20 & 12.3 \\
    03 & Malware Generation   &   44 &   5 & 11.4 \\
    04 & Physical Harm        &  144 &  17 & 11.8 \\
    05 & Economic Harm        &  122 &  15 & 12.3 \\
    06 & Fraud                &  154 &  18 & 11.7 \\
    07 & Sex                  &  109 &  13 & 11.9 \\
    08 & Political Lobbying   &  153 &  18 & 11.8 \\
    09 & Privacy Violence     &  139 &  17 & 12.2 \\
    10 & Legal Opinion        &  130 &  16 & 12.3 \\
    11 & Financial Advice     &  167 &  20 & 12.0 \\
    12 & Health Consultation  &  109 &  13 & 11.9 \\
    13 & Government Decision  &  149 &  18 & 12.1 \\
    \midrule
    \multicolumn{2}{r}{Total} & 1680 & 202 & 12.0 \\
    \bottomrule
  \end{tabular}
\end{table}

\begin{table*}[t]
  \caption{Representative paradigm and control comparison on EMU-3 on MM-SafetyBench.}
  \label{tab:emu-paradigm-overview}
  \centering
  \begin{tabular}{llc}
    \toprule
    Paradigm & Representative setting & ASR (\%) \\
    \midrule
    Direct response & -- & 72.4 \\
    Image generation/editing & Neutral image prefill & 79.2 \\
    Image generation/editing & Harmful image prefill & 83.9 \\
    Visual-state variant & Edited image attention & 76.8 \\
    Explicit image-tool interaction & Benign returned image & Not Support \\
    Explicit image-tool interaction & Unsafe-looking returned image & Not Support \\
    Explicit image-tool interaction & Standard image-tool setting & Not Support \\
    \bottomrule
  \end{tabular}

\end{table*}

The 202-question subset is therefore a near-uniform stratified $12\%$ subsample of the full 1680-question MM-SafetyBench, rather than a quota-based or topic-weighted draw. Because the same subset is reused across all paradigms and models, ASR differences in our experiments cannot be driven by category resampling.

\section{Text-Only Prior Turn Control Details}
\label{app:text-prefill-details}

The text-only prior turn control removes returned visual feedback from the image-tool interaction and keeps a completed text-only prefix before asking the same MM-SafetyBench query. Operationally, it mimics the idea of a prior image-tool-like turn whose returned image has been removed, so the later safety query sees only text. We use two first-turn variants: a normal text-only prefix and a harmful-topic text-only prefix. The normal prefix tests whether the textual image-tool trace of a completed prior turn improves safety, while the harmful prefix tests whether a prior safety-relevant refusal or harmful-topic context changes the later answer. In both cases, the judge evaluates only the final answer to the MM-SafetyBench query.

Table~\ref{tab:text-prefill} reports the valid runs. The Gemma-3-27B direct run in the original batch failed at the provider side and was filled in from a later re-run with matched parameters (sampling seed, judge, sampling rate, and provider backend held constant). Llama-4-Maverick text-only prior turn runs were excluded from the main table because the provider returned degenerate all-safe outputs in repeated runs. Across the valid models, text-only prior turn ASR stays close to direct answering and remains far above explicit image-tool interaction, supporting the conclusion that the text-only prior turn is not sufficient to reproduce the image-tool gain.

\section{Representation Diagnostic Details}
\label{app:representation-details}

For the layer sweep in Figure~\ref{fig:layer-sweep}, we extract hidden states from direct and explicit image-tool interaction runs on Qwen3-VL-8B-Instruct. At each layer~$\ell$, we fit a difference-in-means safety direction
\[
\hat u^{\,\ell}=\mathrm{mean}(\mathrm{safe})-\mathrm{mean}(\mathrm{unsafe})
\]
and score held-out completions by projection onto $\hat u^{\,\ell}$. The best direct-mode readout reaches AUC $0.850$ at layer 22/36, whereas the best image-tool readout reaches AUC $0.942$ at layer 27/36. The gray dotted line in Figure~\ref{fig:layer-sweep} marks the $0.8L$ deep-layer cutoff used for direction selection, and the bottom panels report $\lVert\hat u^{\,\ell}\rVert$.

For the returned-image stability analysis in Figure~\ref{fig:cosine-matrix}, we train the safety direction on the normal VSP returned-image run and evaluate it on five override-image runs. The direction transfers with overall AUC $0.914$ and per-override AUCs from $0.906$ to $0.930$; reversing the train/test direction gives AUC $0.910$. Safety directions extracted from six returned-image variants have mean off-diagonal cosine similarity $0.983$ and minimum $0.970$, indicating that the direction is stable across returned-image content.

Figure~\ref{fig:readout-diagnostics} provides two additional diagnostics for the layer-27 image-tool safety direction. In the PCA view, PC1 explains $40.3\%$ of the variance and is almost aligned with the learned safety direction (projection magnitude $0.993$), indicating a compact, near-one-dimensional safety axis rather than scattered image-specific features. In the cross-override ROC, the safety direction is fit on the normal returned-image run and applied without retraining to held-out runs whose returned images are replaced by other content (smiling people, crashed car, solid black, solid white, noise); AUC $=0.914$ shows that the readout is not tied to any particular returned image. The score-distribution and boundary-sharpness diagnostics in Figure~\ref{fig:appendix-boundary} further show nontrivial mass near the operating threshold, where a positive residual shift can change binary ASR.

\section{Activation-Intervention Details}
\label{app:intervention-details}

Figure~\ref{fig:dose-response} reports intervention sweeps at layer 27 of Qwen3-VL-8B-Instruct. Each panel sweeps the image-tool-direction coefficient along $\hat u_{\text{tool}}$ while holding the direct-direction offset fixed: direct mode with $+0.5\,\hat u_{\text{direct}}$, image-tool mode with $-0.5\,\hat u_{\text{direct}}$, and image-tool mode with $+0.5\,\hat u_{\text{direct}}$. The dashed gray lines mark within-batch zero-injection baselines, and the red dotted line in the image-tool necessity panel marks the no-intervention direct baseline.

The intervention layer, direction, and coefficients are selected on an internal validation split, and all ASR results are reported on the held-out test set. Because image-tool baselines can drift by several ASR points across independent runs (Appendix~\ref{app:baseline-drift}), each sweep includes a zero-injection baseline and reports changes relative to that internal control.

\section{Baseline ASR Drift Across Independent Runs}
\label{app:baseline-drift}

Self-deployed Qwen3-VL-8B image-tool baselines exhibit run-to-run drift even when the model checkpoint, prompts, sampling seed, sampling rate, and LLM-as-judge protocol are all held fixed. Table~\ref{tab:baseline-drift} reports five independent runs of the same Qwen3-VL-8B-Instruct checkpoint on the same 202-question MM-SafetyBench subset under the explicit image-tool paradigm; the runs differ only in that they are launched as separate inference jobs rather than as a single deterministic replay. Across these five matched runs, the observed ASR ranges from $17.3\%$ to $24.8\%$ (mean $20.3\%$, sample standard deviation $3.0$ pp, max--min spread $7.4$ pp).

\begin{table}[h]
  \caption{Baseline ASR (\%) across five independent runs of Qwen3-VL-8B-Instruct under the explicit image-tool (CoMT/VSP) paradigm on the 202-question MM-SafetyBench subset. Same checkpoint, prompts, sampling seed, sampling rate, and judge; the runs differ only in that they are launched as separate inference jobs. The model uses \texttt{bfloat16} weights and activations, which combined with GPU-level nondeterminism is sufficient to drive this run-to-run spread.}
  \label{tab:baseline-drift}
  \centering
  \small
  \begin{tabular}{lccccc|cc}
    \toprule
    Run        & 1 & 2 & 3 & 4 & 5 & Mean & Std \\
    \midrule
    ASR (\%)   & 17.33 & 17.82 & 21.78 & 24.75 & 19.80 & 20.30 & 3.05 \\
    \bottomrule
  \end{tabular}
\end{table}

We attribute this drift to two compounding sources of numerical nondeterminism. First, the self-deployed Qwen3-VL-8B-Instruct runs use \texttt{bfloat16} weights and activations, so reductions inside attention and MLP layers absorb low-bit rounding error whose realisation depends on kernel scheduling and tensor layout. Second, GPU-level nondeterminism in atomic accumulations and in batching across runs can shift logits enough to flip tie-breaking on near-threshold harmful generations. The drift is structural rather than monotonic: independent runs do not converge to a single number, but instead cluster around different stable levels, so a single point estimate from any one run under-reports the true run-level uncertainty.

This drift shapes how we report numbers. All activation-intervention sweeps (Section~\ref{sec:causal}, Figure~\ref{fig:dose-response}) are launched as a single batch and report ASR \emph{relative to a within-batch zero-injection baseline}, so that across-run drift cannot be confused with the intervention effect. The cross-paradigm gaps reported in Tables~\ref{tab:paradigm-overview}--\ref{tab:postproc-results} are roughly $10$--$25$ ASR points, several times larger than the $3.0$ pp run-level standard deviation, so the qualitative ranking of paradigms is not at risk from this variation. We do not, however, claim formal statistical significance for any individual ASR cell: each table entry is a single-run point estimate, not a confidence interval over independent runs.

\section{Proof of the Safety-Projection Result}
\label{app:proof-safety-projection}

\begin{proof}[Proof of Theorem~\ref{thm:safety-projection}]
Let the safety score be $S(z)=u^\top h(z)$ and let explicit image-tool interaction induce a residual shift $v_{tool}$ with strength $\alpha\ge 0$, so that
\[
h_{\mathrm{tool}}(z)=h(z)+\alpha v_{tool},
\qquad
S_{\mathrm{tool}}(z)=S(z)+\alpha u^\top v_{tool}.
\]
Let $\delta=u^\top v_{tool}>0$ denote the safety-score increase per unit intervention strength. Under the threshold rule, the jailbreak event at strength $\alpha$ is
\[
E_\alpha=\{z:S(z)+\alpha\delta<\tau\}
=\{z:S(z)<\tau-\alpha\delta\}.
\]
For any $\alpha_2>\alpha_1\ge0$, we have
\[
\tau-\alpha_2\delta < \tau-\alpha_1\delta,
\]
and therefore
\[
E_{\alpha_2}\subseteq E_{\alpha_1}.
\]
Taking probability under $\mathcal{D}_{\mathrm{adv}}$ gives
\[
\mathcal{R}(\alpha_2)=\Pr(E_{\alpha_2})
\le
\Pr(E_{\alpha_1})=\mathcal{R}(\alpha_1),
\]
so $\mathcal{R}(\alpha)$ is non-increasing in $\alpha$. The difference is
\[
\mathcal{R}(\alpha_1)-\mathcal{R}(\alpha_2)
=
\Pr\!\left[
\tau-\alpha_2\delta
\le S(z) <
\tau-\alpha_1\delta
\right].
\]
Thus the decrease is strict exactly when the adversarial distribution has nonzero mass in the score band crossed by the shift. This proves the theorem.
\end{proof}

\paragraph{Smooth-link variant.}
The same monotonicity holds if unsafe probability is modeled as a smooth decreasing function of the safety score:
\[
p_{\mathrm{unsafe}}(z;\alpha)
=\sigma\!\left(\beta(\tau-S(z)-\alpha u^\top v_{tool})\right),
\]
where $\sigma$ is the logistic function or any differentiable increasing link, $\beta>0$ controls boundary sharpness, and $\tau$ is the operating threshold. The expected jailbreak rate under adversarial inputs is
\[
\mathcal{R}_{\mathrm{smooth}}(\alpha)=
\mathbb{E}_{z\sim\mathcal{D}_{\mathrm{adv}}}
\left[
\sigma\!\left(\beta(\tau-S(z)-\alpha u^\top v_{tool})\right)
\right].
\]
Differentiating with respect to $\alpha$ gives
\[
\frac{d\mathcal{R}_{\mathrm{smooth}}(\alpha)}{d\alpha}
=
\mathbb{E}_{z\sim\mathcal{D}_{\mathrm{adv}}}
\left[
\sigma'\!\left(\beta(\tau-S(z)-\alpha u^\top v_{tool})\right)
\cdot
\left(-\beta u^\top v_{tool}\right)
\right].
\]
Because $\sigma'(\cdot)>0$ on the non-saturated range, $\beta>0$, and $u^\top v_{tool}>0$, the integrand is negative wherever the sample lies in that range. If the adversarial distribution has nonzero mass in the non-saturated region of the readout, then $d\mathcal{R}_{\mathrm{smooth}}(\alpha)/d\alpha<0$.

This result is intentionally local: if all adversarial examples are already far from the decision boundary, a finite residual shift may not change the observed binary ASR. The boundary diagnostics in Figure~\ref{fig:appendix-boundary} check the relevant empirical condition by showing that the learned readout has nontrivial mass near its operating threshold.

\end{document}